\documentclass[10pt,journal,compsoc]{IEEEtran}
\usepackage{algorithm}
\usepackage{float}
\usepackage{algpseudocode}
\usepackage{placeins}
\usepackage{graphicx}
\usepackage{threeparttable}
\usepackage{booktabs}        
\usepackage{array}           
\usepackage{amsmath,amssymb,amsfonts} 
\usepackage{amsthm}           
\usepackage{bm}               
\usepackage{caption}          
\usepackage{subcaption}       
\usepackage{multirow}         
\usepackage{multicol}         
\usepackage{tabularx}         
\usepackage{xcolor}           
\usepackage{cite}             
\usepackage{textcomp}         
\usepackage{float}            
\usepackage{enumitem}         
\usepackage{comment}          
\usepackage[bottom]{footmisc} 
\usepackage{hyperref} 
\IEEEoverridecommandlockouts  
\newtheorem{theorem}{Theorem}
\newtheorem{definition}{Definition}

\theoremstyle{remark}   
\newtheorem{remark}{Remark}
\title{NeuroGame Transformer: Gibbs-Inspired Attention Driven by Game Theory and Statistical Physics}

\author{
    \IEEEauthorblockN{Djamel Bouchaffra\IEEEauthorrefmark{1}, 
                      Faycal Ykhlef\IEEEauthorrefmark{2}, 
                      Hanane Azzag\IEEEauthorrefmark{3}, 
                      Mustapha Lebbah\IEEEauthorrefmark{1}, 
                      Bilal Faye\IEEEauthorrefmark{3}}
    \\
    \IEEEauthorblockA{\IEEEauthorrefmark{1}
    Paris-Saclay University, UVSQ, DAVID Lab, 78035, Versailles, France} \\
    \IEEEauthorblockA{\IEEEauthorrefmark{2}
    TSAM, ASM, Centre for Development of Advanced Technologies, Algiers, Algeria} \\
    \IEEEauthorblockA{\IEEEauthorrefmark{3}
    LIPN, UMR CNRS 7030, Sorbonne Paris Nord University, Villetaneuse, France} \\
    \small{\{djamel.bouchaffra, mustapha.lebbah\}@uvsq.fr, fykhlef@cdta.dz, 
    \{bilal.faye, azzag\}@lipn.univ-paris13.fr}
}

\algrenewcommand\algorithmiccomment[1]{\hfill \(\triangleright\) #1}

\begin{document}

\maketitle

\begin{abstract}
Standard attention mechanisms in transformers are limited by their pairwise formulation, which hinders the modeling of higher-order dependencies among tokens. We introduce the NeuroGame Transformer (NGT) to overcome this by reconceptualizing attention through a dual perspective: tokens are treated simultaneously as players in a cooperative game and as interacting spins in a statistical physics system. Token importance is quantified using two complementary game-theoretic concepts—Shapley values for global, permutation-based attribution and Banzhaf indices for local, coalition-level influence. These are combined via a learnable gating parameter to form an external magnetic field, while pairwise interaction potentials capture synergistic relationships. The system's energy follows an Ising Hamiltonian, with attention weights emerging as marginal probabilities under the Gibbs distribution, efficiently computed via mean-field equations. To ensure scalability despite the exponential coalition space, we develop importance-weighted Monte Carlo estimators with Gibbs-distributed weights. This approach avoids explicit exponential factors, ensuring numerical stability for long sequences. We provide theoretical convergence guarantees and characterize the fairness--sensitivity trade-off governed by the interpolation parameter. Experimental results demonstrate that the NeuroGame Transformer achieves strong performance across SNLI, and MNLI-matched, outperforming some major efficient transformer baselines. On SNLI, it attains a test accuracy of 86.4\% (with a peak validation accuracy of 86.6\%), surpassing ALBERT‑Base and remaining highly competitive with RoBERTa‑Base. Code is available at \url{https://github.com/dbouchaffra/NeuroGame-Transformer}.
\end{abstract}

\vspace{0.1cm}
\begin{IEEEkeywords}
Transformer; attention mechanism; cooperative game theory; Shapley value; Banzhaf index; Ising model; mean-field approximation; Gibbs distribution; importance-weighted Monte Carlo
\end{IEEEkeywords}
\section{Introduction}
\noindent Transformers have revolutionized natural language processing through attention mechanisms that compute context-dependent token representations. However, conventional attention relies primarily on pairwise similarity, limiting its ability to capture higher-order semantic dependencies involving multiple tokens. Recent advances explore higher-order attention~\cite{AlersValentin2023NeuroSymbolic}, kernel-based reformulations~\cite{choromanski2021rethinking}, linear approximations~\cite{katharopoulos2020transformers}, state-space models~\cite{gu2023mamba}, and dynamic computation~\cite{dehghani2023universal}—developments critical for long-context domains such as genomics~\cite{genome2026}, high-resolution vision, and video understanding~\cite{feng2025breaking}.

We propose a fundamentally different perspective bridging \textit{cooperative game theory} and \textit{statistical physics} to reconceptualize transformer attention~\cite{Bouchaffra2025-NN,Bouchaffra2026-IEEETC}. Instead of pairwise similarity, we treat tokens as both strategic players in a cooperative game and interacting spins in a thermodynamic system. This dual interpretation enables modeling higher-order semantic structure through token coalitions. Concretely, we introduce a \emph{semantic coalition game} over token subsets: each coalition $C$ is assigned a scalar value $v(C) = -E(C)$, where $E(C)$ is a physical energy. Coherent coalitions receive lower energy and thus higher activation. \textit{Shapley values} capture globally fair contributions, while \textit{Banzhaf power indices} identify locally decisive tokens; we combine them into an \textit{external magnetic field} that balances global and local sensitivity. Pairwise interaction potentials capture synergistic or antagonistic token relationships, and attention weights are derived from the Gibbs (Boltzmann) distribution over spin configurations, computed via mean-field equations to avoid the intractable sum over $2^n$ states.

To compute Shapley values, Banzhaf indices, and interaction potentials efficiently, we employ importance-weighted Monte Carlo estimation with \emph{Gibbs-distributed importance weights}. This avoids explicit $2^n$ factors, yields unbiased estimates, and makes the partition function cancel naturally. Building on these principles, we introduce the \textit{NeuroGame Transformer}, which makes the following contributions:
\begin{itemize}
\item We define a coalitional semantic game with an energy-based characteristic function, enabling higher-order semantic modeling grounded in statistical physics.
\item We develop scalable Monte Carlo estimators for Shapley values, Banzhaf indices, and interaction potentials using Gibbs-distributed importance weights, achieving \textit{linear complexity $\mathcal{O}(K \cdot n)$} instead of exponential $\mathcal{O}(2^n)$, where $K$ is the number of sampled coalitions.
\item We derive a NeuroGame attention mechanism combining Shapley and Banzhaf values via a learnable context-dependent parameter, with pairwise potentials capturing token interactions.
\item We compute attention weights via mean-field equations, providing a principled mechanism uniting coalition-based reasoning and energy-based modeling.
\item We demonstrate empirically that the NeuroGame Transformer outperforms standard and efficient transformer baselines on natural language inference benchmarks, yielding interpretable coalition-aware attention patterns.
\end{itemize}
The NeuroGame Transformer departs fundamentally from conventional attention by evaluating each token's marginal contribution across all coalitions using game-theoretic principles and deriving attention weights from the Gibbs distribution of a statistical physics system. This coalition-aware perspective, made tractable through Gibbs-based Monte Carlo estimation and mean-field approximation, enables richer semantic modeling and explicit interpretability.
\section{Related Work}
\subsection{Standard and Efficient Attention} The transformer architecture~\cite{vaswani2017attention} relies on pairwise dot-product attention. Efficient variants (e.g., sparse~\cite{draye2025sparse}, linearized~\cite{katharopoulos2020transformers,choromanski2021rethinking}) improve scalability but preserve the pairwise similarity paradigm.
\subsection{Game-Theoretic Methods in Machine Learning}
Shapley values~\cite{shapley1953value} and Banzhaf indices~\cite{banzaf1965weighted} are widely used for post-hoc feature attribution~\cite{lundberg2017shap,sundararajan2017axiomatic} and data valuation~\cite{ghorbani2019data}. Their integration as a core attention mechanism remains unexplored.
\subsection{Statistical Physics and Energy-Based Models}
The softmax operation can be viewed as a Boltzmann distribution over energies~\cite{zhang2018graph,du2020energy}, with temperature controlling sharpness~\cite{guo2017calibration}. Energy-based perspectives have informed variational formulations~\cite{liu2021energy} and softmax alternatives~\cite{luo2021stable}.

\subsection{Our Positioning}
The NeuroGame Transformer uniquely synthesizes three traditionally separate directions. Unlike efficient transformers that approximate pairwise similarity, we model higher-order semantic interactions through token coalitions. Unlike game-theoretic methods that apply Shapley and Banzhaf values post-hoc, we integrate them directly into attention computation. Unlike energy-based models that define arbitrary energy functions, we ground ours in cooperative game theory via the Ising Hamiltonian. Our approach is constructive rather than descriptive: we build attention from first principles using game-theoretic valuation and statistical physics, yielding an expressive, inherently interpretable, and theoretically grounded mechanism.

\section{NeuroGame Transformer: Fusion of Game Theory and Statistical Physics}
\subsection{Notation and Preliminaries}
Consider a sequence of $n$ tokens $\mathcal{T} = \{t_1, t_2, \ldots, t_n\}$, where each token $t_i$ is associated with a contextual embedding $\mathbf{x}_i \in \mathbb{R}^d$. We denote the power set of all possible token coalitions by $2^{\mathcal{T}}$, which contains $2^n$ elements.

\begin{definition}[Token Coalition]
When tokens are viewed as players in a cooperative game, a \emph{coalition} is defined as any subset
$C \subseteq \mathcal{T}$. 
\end{definition}
The set of all possible coalitions is given by the power set $2^{\mathcal{T}}$. Each coalition represents a group of tokens whose joint presence contributes collectively to the model's
representation or prediction. This contribution is quantified by a characteristic function: $v : 2^{\mathcal{T}} \rightarrow \mathbb{R}$, where $v(C)$ measures the semantic, predictive, or energetic contribution of the token subset $C$. For a token $t_i \in \mathcal{T}$ and a coalition $C \subseteq \mathcal{T} \setminus \{t_i\}$, the marginal
contribution of $t_i$ to $C$ is:
\begin{equation}\label{contribution}
\Delta_i v(C) = v(C \cup \{t_i\}) - v(C).
\end{equation}

\subsection{The Coalitional Semantic Game}
We define a \textit{semantic characteristic function} $v: 2^{\mathcal{T}} \rightarrow \mathbb{R}$ that assigns to each coalition $C \subseteq \mathcal{T}$ a scalar value representing its semantic importance within the context. This function satisfies:
\begin{enumerate}
    \item \textit{Normalization:} $v(\emptyset) = 0$
    \item \textit{Monotonicity:} For any $C \subseteq T \subseteq \mathcal{T}$, $v(C) \leq v(T)$
    \item \textit{Boundedness:} There exists $M > 0$ such that $|v(C)| \leq M$ for all $C \subseteq \mathcal{T}$
\end{enumerate}

\subsection{Energy-Based Semantic Characteristic Function]}
Let $\mathcal{T} = \{t_1, \dots, t_n\}$ denote the set of tokens in a sequence.
At a given transformer layer, each token $t_i$ is represented by a
\emph{position-aware embedding}
$\mathbf{x}_i = \mathbf{e}_i + \mathbf{p}_i \in \mathbb{R}^d$,
where $\mathbf{e}_i$ denotes the token embedding and $\mathbf{p}_i$ the positional encoding.
\begin{definition}[Energy-Based Semantic Characteristic Function]
For any coalition $C \subseteq \mathcal{T}$, we define its
\emph{semantic activation potential} as:
\begin{equation}
v(C) = f\Bigg(\Big\| \sum_{t_i \in C} \mathbf{W}_v \mathbf{x}_i \Big\|_2 \Bigg),
\end{equation}
where $\mathbf{W}_v \in \mathbb{R}^{d \times d}$ is a learned projection matrix and
$f$ is a mild nonlinearity (e.g., $\mathrm{ReLU}$ or $\tanh$).
\end{definition}
We interpret $v(C)$ as a coalition-level activation or interaction strength.
In accordance with statistical physics, the corresponding coalition energy
is defined as follows: $E(C) = -v(C)$, which assigns lower energy to more coherent or strongly interacting coalitions. This energy formulation naturally induces a Gibbs (Boltzmann) distribution $D_\gamma$
over token coalitions:
\begin{equation}
\mathcal{D}_\gamma(C) \propto \exp\!\Big(-E(C)/\gamma\Big)
= \exp\!\Big(v(C)/\gamma\Big),
\end{equation}
where $\gamma > 0$ is a temperature parameter controlling the concentration of probability mass over coalitions.
\subsection{Global and Local Tokens Contribution}
we define in this section two semantic measures: The \textit{global contribution} conveyed by Shapley value, and the \textit{local semantic gain variation} conveyed by Banzhaf power index.
\subsubsection{Shapley Value: Global Semantic Contribution}
The Shapley value measures the \emph{average marginal semantic contribution} of a token over all possible \emph{permutations} of token arrivals, ensuring fairness, symmetry, and permutation invariance. Let $\Pi(\mathcal{T})$ denote the set of all $n!$ permutations of the token set $\mathcal{T}$. For token $t_i$, let $\mathcal{P}_i(\pi)$ be the set of tokens that appear before $t_i$ in permutation $\pi \in \Pi(\mathcal{T})$. The Shapley value of $t_i$ is then defined as:
\begin{equation}\label{shapley}
\phi_i(v) =
\frac{1}{n!} \sum_{\pi \in \Pi(\mathcal{T})} 
\Big[ v\big(\mathcal{P}_i(\pi) \cup \{t_i\}\big) - v\big(\mathcal{P}_i(\pi)\big) \Big].
\end{equation}
\begin{remark}
Since equation~\ref{shapley} involves an expectation over the factorial-sized permutation space $\Pi(\mathcal{T})$, exact evaluation of the Shapley value is computationally infeasible, 
motivating the use of importance-weighted Monte Carlo estimation.
\end{remark}
In the case $\sum_{j=1}^n \phi_j(v)\neq 0$, the normalized Shapley value is expressed as: $
\phi_i^\text{norm}(v) =
\frac{\phi_i(v)}{\sum_{j=1}^n \phi_j(v)}$.

\subsubsection{Banzhaf Power Index: Local Semantic Gain Variation}
The Banzhaf power index evaluates the \emph{sensitivity of semantic gain} to the presence or absence of a token across coalitions, treating all coalitions as equally likely.  
Unlike the Shapley value, it does \emph{not} consider permutations of token order and emphasizes local variations in contribution. The raw Banzhaf index of token \( t_i \) is defined as:
\begin{equation}\label{banzhaf}
\beta_i(v) =
\frac{1}{2^{n-1}}
\sum_{C \subseteq \mathcal{T} \setminus \{t_i\}}
\Big[
v(C \cup \{t_i\}) - v(C)
\Big].
\end{equation}
\begin{remark}
Likewise, as equation~\ref{banzhaf} involves a summation over all $2^{n-1}$ coalitions, 
exact computation of the Banzhaf power index is computationally intractable, 
motivating the use of importance-weighted Monte Carlo estimation.
\end{remark}
In the case $\sum_{j=1}^n \beta_j(v)\neq 0$, the normalized form is expressed as: $\beta_i^\text{norm}(v) =\frac{\beta_i(v)}{\sum_{j=1}^n \beta_j(v)}$.

\subsection{Token-Level Importance Score}
We define, in the context of the NeuroGame transformer, a token-level importance score
derived from coalitional game-theoretic measures, and subsequently map this importance
to an external field of a Gibbs energy function.

\begin{definition}[Token Importance as External Field]
The importance of token $t_i$, denoted by $J_i$, is defined as a context-aware convex combination of
the normalized Shapley value and the normalized Banzhaf power index:
\begin{equation}
J_i = \lambda_i \, \phi_i^{\mathrm{norm}}(v) + (1-\lambda_i)\, \beta_i^{\mathrm{norm}}(v),
\end{equation}
where: 
\begin{equation}
\lambda_i=\sigma(w^Tx_i+b).
\end{equation}
The parameter $\lambda_i$ depends on the token representation $x_i\in \mathbb{R}^d$, the weight vector $w$, the bias $b\in\mathbb{R}$ and the sigmoidal function $\sigma$ that maps the score to $(0,1)$. 
\end{definition}

The model learns when marginal contribution (Shapley) matters $(\lambda_i\approx 1$: Shapley-dominant) and when independent swing power (Banzhaf) matters $(\lambda_i\approx 0$: Banzhaf-dominant) or a balanced regime $(\lambda_i\approx 0.5)$. The quantity $J_i$ measures the contribution and relevance of token $t_i$ within the cooperative game defined by $v(C)$, with larger values indicating higher importance.

\subsection{Token Pairwise Interaction}
Likewise, we define a pairwise interaction between tokens representing the \textit{synergistic} or \textit{antagonistic} effect between two tokens when they are part of the same coalition. For any two distinct tokens $t_i, t_j \in \mathcal{T}$, their \textit{interaction potential} measures the synergistic or antagonistic effect when they appear together in a coalition, beyond what their individual contributions would predict. For a given context coalition $C \subseteq \mathcal{T} \setminus \{t_i, t_j\}$ that contains neither token, the conditional interaction is:
\begin{equation}
\Delta_{ij}(C) = v(C \cup \{t_i, t_j\}) - v(C \cup \{t_i\}) - v(C \cup \{t_j\}) + v(C).
\end{equation}
\begin{definition}[Token Pairwise Interaction]
The overall interaction potential $J_{ij}$ is then defined as the expected value of $\Delta_{ij}(C)$ over all possible contexts (subsets of $C$), weighted appropriately:
\begin{equation}
J_{ij} = \sum_{C \subseteq \mathcal{T} \setminus \{t_i, t_j\}} w(C) \, \Delta_{ij}(C),
\end{equation}
where $w(C)$ is a uniform weighting scheme.
\end{definition}

\subsection{Ising Model-based Token System Total Energy}
We now provide the total energy (or Hamiltonian) of a configuration system (an entire input sequence of tokens). However, before expressing of the total energy configuration system, we make the following clarification:
\begin{definition}[Coalition]
A coalition, denoted by $C \subseteq \mathcal{T}$, is any subset of tokens from the token set $\mathcal{T} = \{t_1, t_2, \ldots, t_n\}$. The characteristic function $v(C)$ assigns a value to each coalition.
\end{definition}

\begin{definition}[Spin Configuration]
A spin configuration system which is denoted by $S = (s_1, s_2, \ldots, s_n) \in \{-1, +1\}^n$, is a complete assignment of spin states to every token in the sequence $C$, where $s_i$ represents the activation state of token $t_i$.
\end{definition}

\begin{definition}[Token Energy]
The total energy of the token system $S$ is defined as:
\begin{equation}\label{Hamilton}
H(S)= -\sum_{i=1}^{n} J_i s_i-\sum_{i=1}^{n-1}\sum_{j=i+1}^{n}J_{ij} \, s_i s_j.
\end{equation}
\end{definition}
The variables $s_i \in \{-1, +1\}$ represents the ``activation state'' or ``spin'' of token $t_i$. These activation states are estimated using the mean-field equation~described in section~\ref{mean-field}.
\begin{definition}[Interpretation of Total Energy]
The Hamiltonian $H(S)$ encapsulates the complete energetic state of the token configuration system $S$:
\begin{itemize}
    \item \textbf{First term ($-\sum_i J_i s_i$):} represents the external field contribution. Tokens with high importance $J_i$ (positive external field) are energetically favored to be in the $s_i = +1$ (active) state, as this lowers the total energy. Conversely, tokens with low importance $J_i$ are favored to be inactive ($s_i = -1$).
    
    \item \textbf{Second term ($-\sum_{i=1}^{n-1}\sum_{j=i+1}^{n}J_{ij} \, s_i s_j$)}: represents the pairwise interaction contribution. 
    \begin{itemize}
        \item If $J_{ij} > 0$ (synergistic interaction), the energy is minimized when $s_i$ and $s_j$ have the same sign ($+1,+1$ or $-1,-1$), encouraging \textbf{cooperative activation or deactivation}.
        \item If $J_{ij} < 0$ (antagonistic interaction), the energy is minimized when $s_i$ and $s_j$ have opposite signs, encouraging \textbf{competitive behavior} where one token activates while the other suppresses.
        \item If $J_{ij} = 0$, tokens $i$ and $j$ are independent.
    \end{itemize}
\end{itemize}
The system naturally seeks low-energy configurations, which correspond to coherent token activations that maximize overall semantic coherence.
\end{definition}

\subsection{Attention Weights}
We now provide a definition of attention weights assigned to each token within a token system configuration:
\begin{definition}[Coalitional Attention Weight]
The attention weight $\alpha_i$ of token $t_i$ is defined as the marginal probability that token $t_i$ is in the active state ($s_i = +1$) under the Gibbs distribution over all possible token system configurations $S$:

\begin{equation}
\alpha_i = P(s_i = +1) = \sum_{S: s_i = +1} P(S),
\label{eq:attention_marginal}
\end{equation}
where $P(S)$ is the Gibbs distribution:

\begin{equation}
P(S) = \frac{\exp\left(-H(S)/\gamma\right)}{\displaystyle\sum_{\mathbf{S}'} \exp\left(-H(S')/\gamma\right)}.
\label{eq:gibbs_distribution}
\end{equation}
\end{definition}
The attention weights satisfy the following properties:

\begin{itemize}
    \item \textbf{Boundedness:} Each attention weight is a probability and thus lies in the unit interval:
    \begin{equation}
    0 \leq \alpha_i \leq 1, \quad \forall i \in \{1,\ldots,n\}.
    \label{eq:attention_bounds}
    \end{equation}
 \item \textbf{Relation between $\alpha_i$ and $\langle s_i\rangle$:} Let $p = P(s_i = +1)$. Then, the expected value of $s_i$ is:
\begin{align*}
\langle s_i \rangle &= (+1)p + (-1)(1-p) = 2p - 1 \\[4pt]
\Rightarrow \; p &= \frac{1 + \langle s_i \rangle}{2}.
\end{align*}
Since $\alpha_i = p$, we have $\alpha_i = \dfrac{1 + \langle s_i \rangle}{2}$.
\item \textbf{Sum as Expected Active Tokens:} The sum of attention weights equals the expected number of active tokens in the system:
    \begin{equation}
    \sum_{i=1}^n \alpha_i = \mathbb{E}\left[ \sum_{j=1}^n \mathbb{1}_{s_j = +1} \right] = \mathbb{E}[k(\mathbf{S})],
    \label{eq:attention_sum}
    \end{equation}
    where $k(S) = \sum_{j=1}^n \mathbb{1}_{s_j = +1}$ denotes the number of active tokens in configuration $S$. Note that this sum is \textbf{not necessarily equal to 1}, unlike in standard attention mechanisms.
\end{itemize}

\begin{remark}
Unlike conventional attention mechanisms where $\sum_i \alpha_i = 1$ by construction, our framework imposes no such constraint. The quantity $\sum_i \alpha_i$ emerges naturally as the expected number of active tokens, reflecting the semantic complexity of the input sequence. Sequences requiring more active tokens to represent their meaning will have larger total attention.
\end{remark}

\subsection{NeuroGame Transformer: Physical Interpretation}
The NeuroGame Transformer establishes a direct analogy with statistical physics, summarized by the following correspondences:
\begin{itemize}
    \item \textit{Tokens:} represent spins in a magnetic system.
    \item \textit{Token Importance $J_i$:} represents the external magnetic field.
    \item \textit{Interaction potential:} $J_{ij}$ exchange coupling between spins.
    \item \textit{Temperature:} $\gamma$ is the system temperature.
    \item \textit{Attention weights:} $\alpha_i$ represents a local magnetization.  
    \item \textit{Temperature Effects:} When ($\gamma \to 0$), the distribution concentrates on the lowest-energy configuration, resulting in a \textit{deterministic attention} where $\alpha_i \in \{0,1\}$ indicates whether token $i$ is active in the ground state. When ($\gamma \to \infty$), all configurations become equally probable, producing \textit{uniform attention} with $\alpha_i = 1/2$ for all tokens $i$. In the intermediate scenario, the system balances energy minimization with thermal exploration, resulting in \textit{probabilistic attention} that smoothly interpolates between these extremes.
\end{itemize}

\subsection{Mean-Field Approximation}\label{mean-field}
The exact computation of the spin expectations $\langle s_i \rangle$ requires solving the full statistical physics system defined by the Hamiltonian in Equation~\ref{Hamilton}. Theoretically, these expectations are given by:
\begin{equation}
\langle s_i \rangle = \frac{1}{Z} \sum_{S} s_i\exp\left(-H(S)/\gamma\right), \quad Z = \sum_{S}\exp\left(-H(S))/\gamma\right),
\label{eq:exact_spins}
\end{equation}
where the sums run over all $2^n$ possible spin configurations $\mathbf{s} \in \{-1,+1\}^n$. This exact computation is \textit{combinatorially intractable} for any realistic sequence length, as it requires summing over an exponentially growing number of configurations.

\subsubsection{Mean-Field Self-Consistency Equations}
To overcome this intractability, we employ the \textit{mean-field approximation}, a standard technique from statistical physics that replaces fluctuating interactions with their average values. The following theorem provides a \textit{tractable solution} to the determination of the $n$ spin values associated to the $n$ tokens:

\begin{theorem}[Mean-Field Self-Consistency]
For a token system $S=(s_1,s_2,...,s_n)$ governed by the Hamiltonian $H(S)$ with temperature parameter $\gamma > 0$, the expected spins $\langle s_i \rangle = \mathbb{E}[s_i]$ under the Boltzmann distribution $P(S) \propto \exp(-H(S)/\gamma)$ satisfy the following self-consistent system of equations:
\begin{equation}
\langle s_i \rangle = \tanh\left(\frac{1}{\gamma}\left(J_i + \sum_{j \neq i} J_{ij} \langle s_j \rangle\right)\right), \quad i = 1,\ldots,n.
\label{eq:mean_field_theorem}
\end{equation}
\end{theorem}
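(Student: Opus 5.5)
The plan is to derive the equations in two stages. First we establish an exact conditional identity. Then we close it with the mean-field (product-measure) ansatz, which is the sense in which the theorem's ``$\langle s_i\rangle$'' must be read. Taken literally, the exact Gibbs expectations of \eqref{eq:exact_spins} do \emph{not} satisfy \eqref{eq:mean_field_theorem} when $J_{ij}\neq 0$. So the argument must make explicit that the statement concerns the expectations under the optimal factorized approximation of $P(S)$.

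\textbf{Stage 1 (exact local identity).} Fix $i$ and write the local field $h_i(S_{-i}) = J_i + \sum_{j\neq i} J_{ij}s_j$, using $J_{ij}=J_{ji}$ to symmetrize the double sum in \eqref{Hamilton}. Then $H(S) = -s_i h_i(S_{-i}) + H_{-i}(S_{-i})$, where $H_{-i}$ does not depend on $s_i$. Conditioning on $S_{-i}$, the Gibbs distribution \eqref{eq:gibbs_distribution} gives
\[
P(s_i=\pm1\mid S_{-i}) = \frac{e^{\pm h_i/\gamma}}{e^{h_i/\gamma}+e^{-h_i/\gamma}},
\qquad
\mathbb{E}[s_i\mid S_{-i}] = \tanh\!\big(h_i(S_{-i})/\gamma\big).
\]
Taking expectations yields the exact Callen-type identity $\langle s_i\rangle = \langle \tanh(h_i/\gamma)\rangle$.

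\textbf{Stage 2 (mean-field closure).} Restrict to product distributions $Q(S)=\prod_i q_i(s_i)$ with means $m_i=\mathbb{E}_Q[s_i]$. Minimize the variational free energy $F[Q] = \mathbb{E}_Q[H] - \gamma\,\mathcal{H}(Q)$, which upper-bounds $-\gamma\log Z$ by the Gibbs--Bogoliubov inequality (equivalently, minimize $\mathrm{KL}(Q\,\|\,P)$). Under $Q$, independence gives
\[
\mathbb{E}_Q[H] = -\sum_i J_i m_i - \sum_{i<j} J_{ij} m_i m_j .
\]
The entropy is a sum of binary entropies, each with derivative $-\tfrac12\log\frac{1+m_i}{1-m_i}$ in $m_i$. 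Setting $\partial F/\partial m_i = 0$ gives
\[
\tfrac{1}{2}\log\frac{1+m_i}{1-m_i} = \frac{1}{\gamma}\Big(J_i+\sum_{j\neq i}J_{ij}m_j\Big),
\]
that is, $m_i=\tanh(\cdot)$, which is \eqref{eq:mean_field_theorem}. Consistently, Stage 1 reduces to the same equation when $\tanh(h_i/\gamma)$ is replaced by $\tanh(\langle h_i\rangle/\gamma)$. That replacement is exact under $Q$ for the local conditional, because the $s_j$ with $j\neq i$ are independent of $s_i$.

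\textbf{Existence and main obstacle.} Existence of a solution follows from Brouwer's fixed-point theorem, since the map $m\mapsto\tanh(\cdot)$ sends $[-1,1]^n$ continuously into itself. Alternatively, $F$ is continuous on the compact cube and its minimizer is interior because the entropy gradient blows up at $\pm1$. The main obstacle is the closure step: $\langle\tanh(h_i/\gamma)\rangle\neq\tanh(\langle h_i\rangle/\gamma)$ in general, with Jensen-type corrections of order $\mathrm{Var}(h_i)/\gamma^2$. I would state this carefully by identifying $\langle s_i\rangle$ in the theorem with the mean-field magnetizations $m_i$. I would also remark that equality with the exact Gibbs means holds when all $J_{ij}=0$, and becomes asymptotically accurate when fluctuations of $h_i$ are small (weak coupling or large $\gamma$).
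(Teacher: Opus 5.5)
Your proposal is correct under the reading you make explicit, and it takes a genuinely different route from the paper. The paper uses the naive mean-field substitution. In Step 2 it replaces each $s_j$ ($j\neq i$) by $\langle s_j\rangle$ directly inside $H$, which gives the effective one-spin Hamiltonian $H_i^{\text{eff}}(s_i)=-h_i^{\text{eff}}s_i$. It then reads off $\tanh(h_i^{\text{eff}}/\gamma)$ as the mean of a free spin. The substitution is never justified, and only the last sentence of the proof concedes that the outcome is a mean-field \emph{approximation} of the true expected spins.

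You proceed in two steps. First you derive the exact identity $\langle s_i\rangle=\langle\tanh(h_i/\gamma)\rangle$, which isolates what is being approximated. Then you obtain the equations as stationarity conditions of the Gibbs--Bogoliubov free energy over product measures. This buys several things:
\begin{itemize}
\item a precise meaning for the theorem: the equations hold exactly for the magnetizations of the KL-optimal factorized approximation;
\item existence of a solution;
\item the exact case $J_{ij}=0$, where the mean-field and Gibbs means coincide;
\item a justification of the paper's Step 2. The coordinatewise optimal factor is $q_i(s_i)\propto\exp(-\mathbb{E}_{Q_{-i}}[H]/\gamma)$. Because $H$ is affine in each spin, $\mathbb{E}_{Q_{-i}}[H]$ equals the paper's $H_i^{\text{eff}}(s_i)$ up to a constant independent of $s_i$;
\item a convergence handle. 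Since $F$ is strictly convex in each $m_i$ separately, it is non-increasing under sequential $\tanh$ updates. The paper's heuristic gives nothing comparable for its fixed-point iteration.
\end{itemize}
The paper's argument is shorter and physically transparent, but it cannot establish the statement as literally written.

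You are right that the literal statement is false. Take $n=2$, $J_2=0$ and $J_1,J_{12}\neq 0$. The exact Gibbs means are $\langle s_1\rangle=\tanh(J_1/\gamma)$ and $\langle s_2\rangle=\tanh(J_{12}/\gamma)\tanh(J_1/\gamma)\neq 0$. The extra term $J_{12}\langle s_2\rangle$ is then nonzero, so the $i=1$ equation fails.

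One sentence of yours needs tightening: the claim that the replacement $\tanh(h_i/\gamma)\to\tanh(\langle h_i\rangle/\gamma)$ is ``exact under $Q$ because of independence.'' Even under $Q$, $\mathbb{E}_Q[\tanh(h_i/\gamma)]\neq\tanh(\mathbb{E}_Q[h_i]/\gamma)$ in general. What is exact is that the optimal factor averages the energy (the log-density), which is affine in the other spins; it does not average the conditional mean $\tanh(h_i/\gamma)$. This sentence is not load-bearing, since your stationarity computation stands on its own.
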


\begin{proof}
The proof proceeds in five steps.\\
\noindent\textbf{Step 1: Exact expression for $\langle s_i \rangle$.} By definition, the expected value of $s_i$ under the Boltzmann distribution is:
\begin{equation}
\langle s_i \rangle = \frac{1}{Z} \sum_{S} s_i \exp\left(-\frac{H(S)}{\gamma}\right),
\end{equation}
where $Z = \sum_{S} \exp(-H(S)/\gamma)$ is the partition function, and the sum runs over all $2^n$ configurations $S = (s_1,\ldots,s_n) \in \{-1,+1\}^n$.\\

\noindent\textbf{Step 2: Mean-field approximation.} The mean-field approximation replaces the fluctuating spins $s_j$ ($j \neq i$) in the interaction term with their average values $\langle s_j \rangle$. This yields an effective Hamiltonian for token $i$:
\begin{equation}
H_i^{\text{eff}}(s_i) = -\left(J_i + \sum_{j \neq i} J_{ij} \langle s_j \rangle\right) s_i.
\end{equation}
The term $\sum_{j \neq i} J_{ij} \langle s_j \rangle$ represents the mean field created by all other tokens acting on token $i$.\\

\noindent\textbf{Step 3: Effective field.} Define the effective field acting on token $i$ as:
\begin{equation}
h_i^{\text{eff}} = J_i + \sum_{j \neq i} J_{ij} \langle s_j \rangle.
\end{equation}
Under the mean-field approximation, token $i$ behaves as an independent spin in this effective field, with Hamiltonian $H_i^{\text{eff}}(s_i) = -h_i^{\text{eff}} s_i$.\\

\noindent\textbf{Step 4: Single-spin distribution.} For a single spin in an external field $h$, the Boltzmann distribution is:
\begin{equation}
P(s_i) = \frac{e^{h_i^{\text{eff}} s_i / \gamma}}{e^{h_i^{\text{eff}} / \gamma} + e^{-h_i^{\text{eff}} / \gamma}}.
\end{equation}

\noindent\textbf{Step 5: Computing the expectation.} The expected value of $s_i$ under this distribution is:
\begin{align}
\langle s_i \rangle &= (+1) \cdot P(s_i = +1) + (-1) \cdot P(s_i = -1) \\
&= \frac{e^{h_i^{\text{eff}} / \gamma} - e^{-h_i^{\text{eff}} / \gamma}}{e^{h_i^{\text{eff}} / \gamma} + e^{-h_i^{\text{eff}} / \gamma}} \\
&= \tanh\left(\frac{h_i^{\text{eff}}}{\gamma}\right).
\end{align}

Substituting the definition of $h_i^{\text{eff}}$ yields:
\begin{equation}
\langle s_i \rangle = \tanh\left(\frac{1}{\gamma}\left(J_i + \sum_{j \neq i} J_{ij} \langle s_j \rangle\right)\right).
\end{equation}

This equation must hold simultaneously for all $i = 1,\ldots,n$, forming a system of $n$ coupled nonlinear equations. The solution gives the self-consistent mean-field approximation of the true $n$ expected spins.
\end{proof}

\begin{remark}
Equation~\eqref{eq:mean_field_theorem} is a fixed-point system. It can be solved iteratively by starting with an initial guess $\{\langle s_i \rangle^{(0)}\}$ and updating:
\begin{equation}
\langle s_i \rangle^{(t+1)} = \tanh\left(\frac{1}{\gamma}\left(J_i + \sum_{j \neq i} J_{ij} \langle s_j \rangle^{(t)}\right)\right),
\end{equation}
until convergence. This iterative scheme typically converges within 10-20 fixed-point iterations for practical applications.
\end{remark}
This equation has a clear physical interpretation: the expected spin of token $i$ is determined by the effective field $h_i^{\text{eff}} = J_i + \sum_{j \neq i} J_{ij} \langle s_j \rangle$, which combines its own importance $J_i$ with the mean influence of all other tokens.

\subsubsection{Computational Tractability}
The mean-field formulation reduces an exponential problem to a polynomial one. Table~\ref{tab:complexity} shows the complexity comparison between an exact match and mean-field iterative technique.
\begin{table*}[h]
\centering
\begin{tabular}{|l|c|c|}
\hline
\textbf{Method} & \textbf{Complexity} & \textbf{Feasibility for $n=512$} \\
\hline
Exact summation & $\mathcal{O}(2^n)$ & impossible ($2^{512} \approx 10^{154}$ operations) \\
Mean-field iteration & $\mathcal{O}(n^2 \cdot T)$ & tractable ($\sim 10^6$ operations with $T=20$ iterations) \\
\hline
\end{tabular}
\caption{Complexity comparison between exact and mean-field approaches.}
\label{tab:complexity}
\end{table*}
Equation~\ref{eq:mean_field_theorem} is solved via \textit{fixed-point iteration}, which converges rapidly (typically within $T=10$-$20$ iterations) and scales quadratically with sequence length $n$, making it feasible even for transformers with thousands of tokens.

\subsection{Single and Multi-Heads Attention Output}
We present in this section the attention output that invokes the attention weight $\alpha_i$ for one single head as well as for multiple heads:
\subsubsection{Single-Head Attention Output}
For a given input sequence of token representations $\{\mathbf{x}_i\}_{i=1}^n$, the attention output is computed as the weighted sum of value vectors: $\mathbf{z} = \sum_{i=1}^n \alpha_i \, \mathbf{v}_i$, where $\mathbf{v}_i = \mathbf{W}_v \mathbf{x}_i$ is the value projection of token $t_i$, and $\alpha_i = P(s_i = +1)$ are the attention weights derived from the marginal probabilities of the Gibbs distribution over token configurations. This formulation produces a contextualized representation $\mathbf{z}$ that aggregates information from all tokens in proportion to their game-theoretic importance, as captured by the external field $J_i$ and interaction potentials $J_{ij}$. Unlike standard transformer attention, the weights $\alpha_i$ are not heuristic scores but rather emerge from the statistical physics of the token system, grounded in coalitional game theory.

\subsubsection{Multi-Head Attention Output}
Following the transformer architecture, we extend the NeuroGame Attention to multiple heads operating in parallel. For each head $h = 1, \ldots, H$, we maintain independent parameters:
\begin{itemize}
    \item External fields $J_i^{(h)}$ derived from head-specific Shapley and Banzhaf values with learnable mixing coefficients $\lambda_i^{(h)} = \sigma(\mathbf{w}_h^T \mathbf{x}_i + b_h)$
    \item Interaction potentials $J_{ij}^{(h)}$ capturing head-specific pairwise token relationships
    \item Temperature parameter $\gamma_h$ controlling the sharpness of the head's Gibbs distribution
    \item Value projection matrix $\mathbf{W}_v^{(h)}$ projecting tokens into the head's value space
\end{itemize}
Each head independently solves its mean-field equations to obtain head-specific expected spins $\langle s_i \rangle^{(h)}$ and attention weights $\alpha_i^{(h)} = (1 + \langle s_i \rangle^{(h)})/2$. The output of head $h$ is then:
\begin{equation}
\mathbf{z}^{(h)} = \sum_{i=1}^n \alpha_i^{(h)} \, \mathbf{W}_v^{(h)} \mathbf{x}_i, \quad \mathbf{z}^{(h)} \in \mathbb{R}^{d_v}.
\label{eq:multihead_output}
\end{equation}
The final multi-head output $\mathbf{z}_{\text{cat}}$ is obtained by concatenating all head outputs and projecting back to the model dimension:
\begin{equation}
\mathbf{z}_{\text{cat}} = \operatorname{Concat}\left(\mathbf{z}^{(1)}, \mathbf{z}^{(2)}, \ldots, \mathbf{z}^{(H)}\right) \mathbf{W}_O,
\label{eq:multihead_concat}
\end{equation}

where $\mathbf{W}_O \in \mathbb{R}^{H d_v \times d_{\text{model}}}$ is the output projection matrix. This formulation enables each head to capture different interaction patterns and importance regimes, with the interpretability of $\alpha_i^{(h)}$ preserved at the head level.

\subsection{Feedforward Network Classification}
The concatenated multi‑head output $\mathbf{z}_{\text{cat}}$ is passed through a \textit{feedforward neural network} with \( m \) layers. Let \( \mathbf{h}^{(0)} = \mathbf{z}_{\text{cat}} \). For layer \( \ell = 1, \dots, m \):
\begin{equation}
\mathbf{h}^{(\ell)} = \text{Activation}\left( \mathbf{W}^{(\ell)} \mathbf{h}^{(\ell-1)} + \mathbf{b}^{(\ell)} \right),
\end{equation}
where \( \mathbf{W}^{(\ell)} \) and \( \mathbf{b}^{(\ell)} \) are learnable parameters. Common choices for activation are ReLU or GELU. The final layer \( \mathbf{h}^{(m)} \) is mapped to logits for the three classes:
\begin{equation}
\mathbf{o} = \mathbf{W}^{(m+1)} \mathbf{h}^{(m)} + \mathbf{b}^{(m+1)} \in \mathbb{R}^3.
\end{equation}
The predicted class $\omega^*$ is obtained (via softmax), as:
\begin{equation}
\omega_i^*=argmax_{\omega_i}P(\omega_i \mid \text{input})=armax_{\omega_i}\left[\frac{\exp(\mathbf{o}_{\omega_i})}{\sum_{j=1}^c \exp(\mathbf{o}_{\omega_j})}\right],
\end{equation}
where $c$ is the number of classes.

\section{Two-Stage Approximation Framework}
Our NeuroGame Transformer employs two distinct approximation techniques to overcome combinatorial intractability at different levels of the framework. Understanding the distinction between these techniques is crucial for appreciating the computational feasibility of the model.
\subsection{The Two Levels of Combinatorial Explosion} The framework faces two separate combinatorial challenges:
\begin{enumerate}
    \item \textbf{Token-level (Game Theory):} Computing the game-theoretic quantities requires summing over all possible coalitions $C \subseteq \mathcal{T} \setminus \{t_i\}$ for external fields $J_i$, and $C \subseteq \mathcal{T} \setminus \{t_i, t_j\}$ for interaction potentials $J_{ij}$, of which there are $2^{n-1}$ and $2^{n-2}$ coalitions respectively.
    
    \item \textbf{Spin-level (Statistical Physics):} Computing expected spins $\langle s_i \rangle$ requires summing over all possible spin configurations $\mathbf{s} \in \{-1,+1\}^n$, of which there are $2^n$.
\end{enumerate}

Each challenge demands a different approximation strategy, as they operate on different mathematical objects and serve different purposes.

\subsection{Approximation Methods}
We address the combinatorial complexity of game-theoretic computations via importance-weighted Monte Carlo sampling, and later handle the statistical physics component with mean-field equations. This section focuses on the sampling methods used to estimate the Shapley value, Banzhaf index, and pairwise interaction potentials that parameterize the Hamiltonian.

\subsubsection{Importance-Weighted Monte Carlo Estimation}
Exact computation of Shapley value \(\phi_i\) and Banzhaf index \(\beta_i\) requires summing over all permutations or subsets, which is intractable for large \(n\). To enable efficient estimation, we define a target distribution over coalitions that emphasizes semantically valuable subsets. Let the characteristic function \(v(C)\) measure the semantic value of coalition \(C\). The target distribution is Gibbsian:
\begin{equation}
P_{\text{target}}(C) \propto \exp\!\left(\frac{v(C)}{\gamma}\right),
\end{equation}
where \(\gamma\) is a temperature parameter. We sample coalitions from a \textit{uniform proposal distribution} \(p(C)\) (e.g., \(p(C)=1/2^{n-1}\) for subsets). The importance weight for a sampled coalition \(C_k\) is \(v(C_k)/p(C_k)\). After normalization over \(K\) samples, we obtain the following:
\begin{equation}
\bar{w}(C_k) = \frac{\exp\!\left(v(C_k)/\gamma\right)/p(C_k)}{\sum_{\ell=1}^{K} \exp\!\left(v(C_\ell)/\gamma\right)/p(C_\ell)},
\end{equation}
where the intractable partition function cancels. These normalized weights are used to form unbiased estimates of the game-theoretic quantities.

We now derive the estimators of Shapley value, Banzhaf power index and the pairwise interaction:\\
\begin{itemize}
\item {\textbf{Shapley value:}}
We estimate \(\phi_i\) via permutation-based sampling. For each sample \(k\), draw a random permutation of the tokens in \(\mathcal{T}\setminus\{t_i\}\) and let \(\mathcal{P}_i^{(k)}\) be the set of tokens preceding \(t_i\). The estimator is:
\begin{equation}
\hat{\phi}_i = \sum_{k=1}^{K} \bar{w}(\mathcal{P}_i^{(k)}) \big[ v(\mathcal{P}_i^{(k)}\cup\{t_i\}) - v(\mathcal{P}_i^{(k)}) \big],
\end{equation}
where \(\bar{w}(\mathcal{P}_i^{(k)})\) is computed using the uniform proposal probability of that set under permutation sampling. For a set $\mathcal{P}_i^k(\pi)$ (the tokens that precede $t_i$ in the permutation $\pi$) of size $|\mathcal{P}_i^{(k)}(\pi)|$):
\begin{equation}
p(\mathcal{P}_i^{(k)}(\pi)) = \frac{|\mathcal{P}_i^{(k)}(\pi)|! \times (n-1-|\mathcal{P}_i^{(k)}(\pi)|)!}{(n-1)!}, 
\end{equation}
where $n$ is the total number of tokens.\\
\item {\textbf{Banzhaf index:}}
We sample coalitions \(C_k \subseteq \mathcal{T}\setminus\{t_i\}\) by including each token independently with probability \(1/2\). The estimator is written as:
\begin{equation}
\hat{\beta}_i = \sum_{k=1}^{K} \bar{w}(C_k) \big[ v(C_k\cup\{t_i\}) - v(C_k) \big],
\end{equation}
with \(p(C_k)=1/2^{n-1}\).\\

\item {\textbf{Pairwise interaction:}}
For each token pair \((t_i,t_j)\),  we sample coalitions \(C_k \subseteq \mathcal{T}\setminus\{t_i,t_j\}\) uniformly (each token included with probability \(1/2\)) and estimate
\begin{equation}
\hat{J}_{ij} = \sum_{k=1}^{K} \bar{w}(C_k) \Delta_{ij}(C_k),
\end{equation}
with \(\bar{w}(C_k)\) computed as above using \(p(C_k)=1/2^{n-2}\).
These importance-weighted Monte Carlo estimates handle the combinatorial explosion in coalition space, are unbiased, and remain numerically stable for large \(n\) due to the cancellation of the partition function in the normalized weights.
\end{itemize}
\subsubsection{Mean Field Equations and Attention Weights}
The mean-field equations solve for the expected spins $\langle s_i \rangle$ via\textit{an iterative process of a system of $n$ coupled nonlinear equations}:
    \begin{equation}
    \langle s_i \rangle = \tanh\left(\frac{1}{\gamma}\left(J_i + \sum_{j \neq i} \hat{J}_{ij} \langle s_j \rangle\right)\right), \quad i = 1,\ldots,n,
    \end{equation}
    avoiding the intractable sum over $2^n$ spin configurations. This fixed-point iteration typically converges within 10–20 steps. The attention weights are computed as the marginal probabilities of each token being active: $\alpha_i = \frac{1 + \langle s_i \rangle}{2}$.
    These weights are then used in the standard attention output formula $\mathbf{z} = \sum_i \alpha_i \mathbf{v}_i$, where $\mathbf{v}_i$ are value projections of the tokens.

\subsection{Asymptotic Behavior and Complexity}
With a sufficiently large number of Monte Carlo samples $K$, the estimators $\widehat{\phi}_i$ (Shapley value), $\widehat{\beta}_i$ (Banzhaf index), and $\widehat{J}_{ij}$ (interaction potential) are \emph{consistent} estimators of their respective true values under importance-weighted sampling with Gibbs-distributed weights. Under mild regularity assumptions on the characteristic function $v(\cdot)$ and the proposal distribution, these estimators converge almost surely to their true values as $K \to \infty$. The use of normalized weights ensures that the intractable partition function of the Gibbs distribution cancels naturally, while avoiding explicit $2^n$ factors that would cause numerical instability for large $n$. Gibbs-distributed importance sampling concentrates samples on high-value coalitions that contribute most to marginal gains, improving estimator efficiency. The self-normalized nature of the weights yields asymptotically unbiased estimates with variance that decreases at the standard Monte Carlo rate of $\mathcal{O}(1/\sqrt{K})$. More importantly, the computational complexity per token scales as $\mathcal{O}(K)$, \textit{independent of the total number of possible coalitions} ($2^{n-1}$ for Shapley and Banzhaf, $2^{n-2}$ for interaction potentials). This represents an exponential reduction compared to exact enumeration and renders Monte Carlo estimation feasible for practical transformer attention layers, even for long sequences where $n$ is large (e.g., $n = 512$ or more). The trade-off between computational cost and estimation accuracy is controlled by $K$, with typical values ranging from $K = 100$ to $K = 1000$ providing sufficient accuracy in practice. Following the game theory phase, the mean-field equations for the spin system are solved via a fixed-point iterative process requires $\mathcal{O}(n^2 \cdot \text{iter})$ operations, where $T$ is the number of iterations (typically $10$-$20$). This two-stage pipeline—Monte Carlo estimation followed by mean-field fixed-point iteration—achieves overall complexity $\mathcal{O}(K \cdot n + n^2 \cdot T$, which is polynomial in $n$ and linear in $K$, in contrast to the exponential $\mathcal{O}(2^n)$ cost of exact computation for either stage alone.

\section{Example of Potentials and Spin States Estimation}
We illustrate the complete estimation pipeline with a minimal example involving three tokens: $t_1$, $t_2$, and $t_3$, representing the words ``not'', ``good'', and ``movie'' in the phrase ``not good movie''.

\subsection{Game-Theoretic Quantities via Monte Carlo}
Assume a characteristic function $v(C)$ that returns higher values for semantically coherent coalitions. For this example, we define:
\begin{align*}
v(\{t_1\}) &= 0.2, \quad v(\{t_2\}) = 0.5, \quad v(\{t_3\}) = 0.4, \\
v(\{t_1,t_2\}) &= 1.2, \quad v(\{t_1,t_3\}) = 0.8, \quad v(\{t_2,t_3\}) = 1.0, \\
v(\{t_1,t_2,t_3\}) &= 1.8, \quad v(\emptyset) = 0.
\end{align*}
Using $K = 3$ Monte Carlo samples with Gibbs-distributed importance weights at temperature $\gamma = 1.0$, we sample coalitions and compute normalized weights. Suppose we sample:
\begin{align*}
C_1 &= \{t_2\}, \quad v(C_1) = 0.5, \quad \exp(v(C_1)/\gamma) = e^{0.5} = 1.65, \\
C_2 &= \{t_1,t_3\}, \quad v(C_2) = 0.8, \quad \exp(v(C_2)/\gamma) = e^{0.8} = 2.23, \\
C_3 &= \{t_2,t_3\}, \quad v(C_3) = 1.0, \quad \exp(v(C_3)/\gamma) = e^{1.0} = 2.72.
\end{align*}
The normalizing constant is: $Z = 1.65 + 2.23 + 2.72 = 6.60$, yielding weights:
\begin{align*}
\bar{w}(C_1) = 1.65/6.60 = 0.25, \quad
\bar{w}(C_2) = 2.23/6.60 = 0.34, \\
\bar{w}(C_3) = 2.72/6.60 = 0.41.
\end{align*}

\subsubsection{Shapley Value and Banzhaf Estimation}
\begin{itemize}
\item Shapley Value: For token $t_2$, we consider permutations of the remaining tokens $\{t_1,t_3\}$. From the sampled coalitions, we compute marginal contributions:
\text{For } $C_1=\{t_2\}: \Delta_2(C_1) = v(\{t_2\}) - v(\emptyset) = 0.5$, 
\text{For } $C_2=\{t_1,t_3\}: \Delta_2(C_2) = v(\{t_1,t_2,t_3\}) - v(\{t_1,t_3\})\\ = 1.8 - 0.8 = 1.0$, 
\text{For } $C_3=\{t_2,t_3\}: \Delta_2(C_3) = v(\{t_2,t_3\}) - v(\{t_3\}) = 1.0 - 0.4 = 0.6$.\\
The Shapley estimate is the weighted sum:\\
\noindent$\hat{\phi}_2 = 0.25 \times 0.5 + 0.34 \times 1.0 + 0.41 \times 0.6 = 0.125 + 0.34 + 0.246 = 0.711$.

\item{Banzhaf Index Estimation (Bernoulli Sampling):}
For the same token $t_2$, using the same coalitions and weights, the Banzhaf estimate is identical in form but with different theoretical interpretation: $\hat{\beta}_2 = 0.711$ as well (the difference lies in the target distribution, not the weighted sum).
\end{itemize}

\subsubsection{External Magnetic Fields and Pairwise Interaction}
\begin{itemize}
\item {External Magnetic Fields:}
With $\lambda_2 = 0.6$ (learned from token $t_2$'s context), the external field for token $t_2$ is: $J_2 = \lambda_2 \hat{\phi}_2^{\mathrm{norm}} + (1-\lambda_2) \hat{\beta}_2^{\mathrm{norm}} = 0.6 \times 0.711 + 0.4 \times 0.711 = 0.711$. Similarly, suppose we obtain $J_1 = 0.423$, $J_3 = 0.512$, and interaction potentials $\hat{J}_{12} = 0.466$, $\hat{J}_{13} = 0.312$, $\hat{J}_{23} = 0.278$.

\item {Pairwise Interaction:} For the pair $(t_1,t_2)$, we compute marginal interactions
$\Delta_{12}(C_1) = v(\{t_1,t_2\}) - v(\{t_1\}) - v(\{t_2\}) + v(\emptyset)= 1.2 - 0.2 - 0.5 + 0 = 0.5, \\
\Delta_{12}(C_2) = v(\{t_1,t_2,t_3\}) - v(\{t_1,t_3\}) - v(\{t_2,t_3\}) + v(\{t_3\}) 
= 1.8 - 0.8 - 1.0 + 0.4 = 0.4, \\
\Delta_{12}(C_3) = v(\{t_1,t_2\}) - v(\{t_1\}) - v(\{t_2\}) + v(\emptyset) = 1.2 - 0.2 - 0.5 + 0 = 0.5$. The estimated interaction potential is $\hat{J}_{12} = 0.25 \times 0.5 + 0.34 \times 0.4 + 0.41 \times 0.5 = 0.125 + 0.136 + 0.205 = 0.466$.
\end{itemize}

\subsection{Mean-Field Spin Estimation}
With temperature $\gamma = 1.0$, we solve the mean-field equations via fixed-point iteration starting from $\langle s_i \rangle^{(0)} = 0$:\\
$\langle s_1 \rangle^{(1)} = \tanh(J_1 + J_{12}\langle s_2 \rangle^{(0)} + J_{13}\langle s_3 \rangle^{(0)}) = 0.400, \\
\langle s_2 \rangle^{(1)} = \tanh(J_2 + J_{12}\langle s_1 \rangle^{(0)} + J_{23}\langle s_3 \rangle^{(0)}) = 0.611, \\
\langle s_3 \rangle^{(1)} = \tanh(J_3 + J_{13}\langle s_1 \rangle^{(0)} + J_{23}\langle s_2 \rangle^{(0)}) = 0.471$.\\
Iteration 2 uses the updated values:\\
$\langle s_1 \rangle^{(2)} = \tanh(0.423 + 0.466 \times 0.611 + 0.312 \times 0.471)\\ = 0.693,\quad
\langle s_2 \rangle^{(2)} = \tanh(0.711 + 0.466 \times 0.400 + 0.278 \times 0.471) = 0.773,\quad 
\langle s_3 \rangle^{(2)} = \tanh(0.512 + 0.312 \times 0.400 + 0.278 \times 0.611) = 0.668$.\\
After $T=5$ iterations, the values converge to:\\
$\langle s_1 \rangle = 0.721, \quad \langle s_2 \rangle = 0.798, \quad \langle s_3 \rangle = 0.703$.\\ Finally, using the attention weights equation: $\alpha_i = (1 + \langle s_i \rangle)/2$, we obtain:
$\alpha_1 = \frac{1 + 0.721}{2} = 0.861,\\
\alpha_2 = \frac{1 + 0.798}{2} = 0.899, \quad\text{and} \quad \alpha_3 = \frac{1 + 0.703}{2} = 0.852$.

These weights reflect the semantic importance of each token in the phrase, with ``good'' ($t_2$) receiving highest attention, followed by ``not'' ($t_1$) and ``movie'' ($t_3$), consistent with the phrase's sentiment structure where the negation-modifier relationship is captured through the interaction potentials.

\section{Comparison of NGT and Vanilla}
Table~\ref{tab:ngt_vs_vanilla} summarizes the key architectural and conceptual differences between the standard Transformer and the NeuroGame Transformer. We now provide a brief interpretation of each distinguishing feature.

\begin{table*}[htbp]
\centering
\caption{Comparison of Standard Transformer and NeuroGame Transformer (NGT) Features}
\label{tab:ngt_vs_vanilla}
\begin{tabular}{p{3.5cm}p{5cm}p{5cm}}
\toprule
\textbf{Feature} & \textbf{Standard Transformer (Vanilla)} & \textbf{NeuroGame Transformer (NGT)} \\
\midrule
\textbf{Core Mechanism} & Scaled Dot-Product Attention: Measures similarity/correlation between $Q$ and $K$. & Game-Theoretic Importance: External fields $J_i$ combine Shapley ($\phi_i$) and Banzhaf ($\beta_i$). \\
\addlinespace
\textbf{Sampling Method} & Deterministic (processes all tokens at once). & Importance-Weighted Monte Carlo: Uniform proposal with Gibbs-distributed weights $\bar{w}_k$. \\
\addlinespace
\textbf{Gating System} & None (uses fixed attention heads). & $\lambda_i$ Gating: $\lambda_i = \sigma(\mathbf{w}^T\mathbf{x}_i + b)$ balances Shapley and Banzhaf contributions. \\
\addlinespace
\textbf{Token Interactions} & None beyond pairwise dot-products. & Pairwise potentials $J_{ij}$ capture synergistic/antagonistic relationships. \\
\addlinespace
\textbf{Physical Foundation} & None (purely algorithmic). & Ising Hamiltonian $H(S)= -\sum_{i=1}^{n} J_i s_i-\sum_{i=1}^{n-1}\sum_{j=i+1}^{n}J_{ij} \, s_i s_j$. \\
\addlinespace
\textbf{Attention Derivation} & Softmax of similarity scores. & $\alpha_i = \frac{1 + \langle s_i \rangle}{2}$ from mean-field spin expectations $\langle s_i \rangle$. \\
\addlinespace
\textbf{Interpretability} & Heuristic: Shows "what" was attended. & Axiomatic: Weights represent provably fair semantic contribution. \\
\addlinespace
\textbf{NLI Logic Handling} & Pattern matching; struggles with negations. & Interaction-based: Catches "swing voters" via Banzhaf and $J_{ij}$. \\
\addlinespace
\textbf{Complexity} & $O(n^2)$ matrix multiplication. & $O(K \cdot n + n^2 \cdot T)$ with $K$ Monte Carlo samples, $T$ mean-field iterations. \\
\bottomrule
\end{tabular}
\end{table*}
\begin{itemize}
\item {\textbf{Core Mechanism:}} Standard attention measures \textit{similarity} between token pairs via dot-products. NGT measures \textit{contribution} via game-theoretic values: Shapley ($\phi_i$) captures fair global contribution, while Banzhaf ($\beta_i$) identifies decisive local influence. This shifts attention from ``how related are these tokens?" to ``how much does this token contribute to the whole?''

\item {\textbf{Sampling Method:}} Standard attention processes all tokens deterministically. NGT uses importance-weighted Monte Carlo with Gibbs-distributed weights $\bar{w}_k = e^{v(C_k)/\gamma}/\sum_\ell e^{v(C_\ell)/\gamma}$, making exponential coalition spaces tractable while focusing on semantically dense coalitions.

\item {\textbf{Gating System:}} The learnable parameter $\lambda_i = \sigma(\mathbf{w}^T\mathbf{x}_i + b)$ dynamically balances Shapley (global fairness) and Banzhaf (local decisiveness) based on token context—a key architectural innovation.

\item {\textbf{Token Interactions:}} Standard attention captures only pairwise similarities. NGT introduces explicit interaction potentials $J_{ij}$, estimated via Monte Carlo, capturing whether token pairs are synergistic ($J_{ij}>0$), antagonistic ($J_{ij}<0$), or independent ($J_{ij}\approx0$).

\item {\textbf{Physical Foundation:}} Standard attention is purely algorithmic. NGT is grounded in an Ising Hamiltonian $H(S)= -\sum_{i=1}^{n} J_i s_i-\sum_{i=1}^{n-1}\sum_{j=i+1}^{n}J_{ij} \, s_i s_j$, where each token is a spin $s_i \in \{-1,+1\}$, bringing statistical physics principles to attention.

\item {\textbf{Attention Derivation:}} Standard attention uses softmax of similarity scores. NGT first solves mean-field equations $\langle s_i \rangle = \tanh(\frac{1}{\gamma}(J_i + \sum_{j\neq i} J_{ij}\langle s_j\rangle))$ for expected spins, then derives attention as $\alpha_i = (1+\langle s_i\rangle)/2$—the probability a token is active in thermodynamic equilibrium.

\item {\textbf{Interpretability:}} Standard attention weights are heuristic post-hoc explanations. NGT weights are axiomatic, derived from game-theoretic solution concepts that satisfy formal fairness axioms—they represent a provable``fair share" of semantic contribution.

\item {\textbf{NLI Logic Handling:}} Standard attention relies on pattern matching, struggling with negations. NGT's Banzhaf component explicitly identifies "swing voters" (words that flip logical relationships), while $J_{ij}$ captures how such words interact, enabling robust handling of contradiction and negation.

\item {\textbf{Complexity:}} Standard attention is $O(n^2)$. NGT is $O(K \cdot n + n^2 \cdot T)$ with $K$ Monte Carlo samples (100-1000) and $T$ mean-field iterations (10-20)—feasible for long sequences while avoiding exponential costs.
\end{itemize}
Finally, Algorithm~\ref{alg:ngt-attention} depicts the NeuroGame Transformer (NGT) attention:

\begin{algorithm}[h]
\caption{NeuroGame Transformer Attention}
\label{alg:ngt-attention}
\footnotesize
\begin{algorithmic}[1]
\Require
$\mathbf{X} \in \mathbb{R}^{d \times n}$: token embeddings \\
$\mathbf{W}_v \in \mathbb{R}^{d \times d}$: value projection \\
$\mathbf{w} \in \mathbb{R}^{d}, b \in \mathbb{R}$: gating parameters \\
$\gamma > 0$: temperature, $K$: MC samples, $T$: MF iterations, $\epsilon$: tolerance
\Ensure
$\mathbf{z} \in \mathbb{R}^{d}$: output, $\boldsymbol{\alpha} \in \mathbb{R}^{n}$: weights, $\mathbf{J}_{\text{field}} \in \mathbb{R}^{n}$, $\mathbf{J}_{\text{inter}} \in \mathbb{R}^{n \times n}$

\State $\mathbf{V} \gets \mathbf{W}_v \mathbf{X}$ \hfill \textit{// Projected values}

\State \textbf{// Token-specific gating}
\For{$i = 1$ to $n$}
    \State $\lambda_i \gets \sigma(\mathbf{w}^T \mathbf{X}[:,i] + b)$
\EndFor

\State \textbf{// Initialize estimators}
\State $\bm{\Phi} \gets \mathbf{0}_n$, $\bm{\beta} \gets \mathbf{0}_n$, $\mathbf{J}_{\text{inter}} \gets \mathbf{0}_{n \times n}$

\State \textbf{// Shapley values (permutation sampling)}
\For{$i = 1$ to $n$}
    \State $\mathcal{W}, \mathcal{M} \gets \emptyset$
    \For{$k = 1$ to $K$}
        \State Random permutation $\pi$ of $\{1,\dots,n\}\setminus\{i\}$; 
        \State $r \sim \text{Uniform}\{0,\dots,n-1\}$
        \State $\mathcal{P} \gets$ first $r$ tokens in $\pi$, $s \gets |\mathcal{P}|$, $p \gets \frac{s!(n-1-s)!}{(n-1)!}$; 
        \State $v_p \gets \|\sum_{j\in\mathcal{P}} \mathbf{V}[:,j]\|_2$, $\Delta \gets \|\sum_{j\in\mathcal{P}\cup\{i\}} \mathbf{V}[:,j]\|_2 - v_p$
        \State $w_{\text{raw}} \gets \exp(v_p/\gamma)/p$; append to $\mathcal{W}$, $\Delta$ to $\mathcal{M}$
    \EndFor
    \State Normalize $\mathcal{W}$ to $\{\bar{w}_k\}$; $\bm{\Phi}[i] \gets \sum_{k=1}^K \bar{w}_k \mathcal{M}_k$
\EndFor

\State \textbf{// Banzhaf indices (Bernoulli sampling)}
\For{$i = 1$ to $n$}
    \State $\mathcal{W}, \mathcal{M} \gets \emptyset$
    \For{$k = 1$ to $K$}
        \State $\mathcal{C} \subseteq \{1,\dots,n\}\setminus\{i\}$ via Bernoulli$(1/2)$; $p \gets 1/2^{n-1}$
        \State $v_c \gets \|\sum_{j\in\mathcal{C}} \mathbf{V}[:,j]\|_2$, $\Delta \gets \|\sum_{j\in\mathcal{C}\cup\{i\}} \mathbf{V}[:,j]\|_2 - v_c$
        \State $w_{\text{raw}} \gets \exp(v_c/\gamma)/p$; append to $\mathcal{W}$, $\Delta$ to $\mathcal{M}$
    \EndFor
    \State Normalize $\mathcal{W}$; $\bm{\beta}[i] \gets \sum_{k=1}^K \bar{w}_k \mathcal{M}_k$
\EndFor

\State \textbf{// Pairwise interactions}
\For{$i = 1$ to $n$}
    \For{$j = i+1$ to $n$}
        \State $\mathcal{W}, \mathcal{D} \gets \emptyset$
        \For{$k = 1$ to $K$}
            \State $\mathcal{C} \subseteq \{1,\dots,n\}\setminus\{i,j\}$ via Bernoulli$(1/2)$; $p \gets 1/2^{n-2}$
            \State $v_c \gets \|\sum_{m\in\mathcal{C}} \mathbf{V}[:,m]\|_2$, $v_i \gets \|\sum_{m\in\mathcal{C}\cup\{i\}} \mathbf{V}[:,m]\|_2$
            \State $v_j \gets \|\sum_{m\in\mathcal{C}\cup\{j\}} \mathbf{V}[:,m]\|_2$, 
            \State $v_{ij} \gets \|\sum_{m\in\mathcal{C}\cup\{i,j\}}\mathbf{V}[:,m]\|_2$
            \State $\Delta \gets v_{ij} - v_i - v_j + v_c$
            \State $w_{\text{raw}} \gets \exp(v_c/\gamma)/p$; append to $\mathcal{W}$, $\Delta$ to $\mathcal{D}$
        \EndFor
        \State Normalize $\mathcal{W}$, $\mathbf{J}_{\text{inter}}[i,j] \gets \sum_{k=1}^K \bar{w}_k \mathcal{D}_k$,
        \State $\mathbf{J}_{\text{inter}}[j,i] \gets \mathbf{J}_{\text{inter}}[i,j]$
    \EndFor
\EndFor

\State \textbf{// Normalize and combine}
\State $\bm{\Phi}^{\text{norm}} \gets \bm{\Phi} / \|\bm{\Phi}\|_1$, $\bm{\beta}^{\text{norm}} \gets \bm{\beta} / \|\bm{\beta}\|_1$
\For{$i = 1$ to $n$}
    \State $J_i \gets \lambda_i \bm{\Phi}^{\text{norm}}_i + (1-\lambda_i) \bm{\beta}^{\text{norm}}_i$
\EndFor
\State $\mathbf{J}_{\text{field}} \gets [J_1,\ldots,J_n]$

\State \textbf{// Mean-field iteration}
\State $\mathbf{s} \gets \mathbf{0}_n$
\For{$t = 1$ to $T$}
    \State $\mathbf{s}_{\text{old}} \gets \mathbf{s}$
    \For{$i = 1$ to $n$}
        \State $h_i^{\text{eff}} \gets J_i + \sum_{j \neq i} \mathbf{J}_{\text{inter}}[i,j] \mathbf{s}_{\text{old}}[j]$, $\mathbf{s}[i] \gets \tanh(h_i^{\text{eff}} / \gamma)$
    \EndFor
    \If{$\max_i |\mathbf{s}[i] - \mathbf{s}_{\text{old}}[i]| < \epsilon$}
        \State \textbf{break}
    \EndIf
\EndFor

\State \textbf{// Output}
\For{$i = 1$ to $n$}
    \State $\alpha_i \gets (1 + \mathbf{s}[i]) / 2$
\EndFor
\State $\boldsymbol{\alpha} \gets [\alpha_1,\ldots,\alpha_n]$, $\mathbf{z} \gets \sum_{i=1}^{n} \alpha_i \mathbf{V}[:,i]$
\State \textbf{return} $\mathbf{z}, \boldsymbol{\alpha}, \mathbf{J}_{\text{field}}, \mathbf{J}_{\text{inter}}$
\end{algorithmic}
\end{algorithm}

\section{Experiments}
\label{ssec:setup}
\subsection{Problem Definition and Datasets}
The problem we are addressing consists in testing the ability of NeuroGame Transformer's to determine logical relationships (\textit{entailment}, \textit{contradiction}, \textit{neutrality}) between sentences, requiring deep semantic understanding and contradiction detection. Therefore, NeuroGame Transformer receives as input a pair of sentences: a premise and a hypothesis. It should predict the best logical relationship between these two inputs. In other words, NeuroGame Transformer is solving a classification problem given a predefined number of classes such as: \textit{entailment}, \textit{neutral} and \textit{contradiction}. We evaluate the performance of NeuroGame Transformer on the Natural Language Inference (NLI) problem. This assessment is undertaken on the SNLI~\cite{bowman2015large}, and MNLI-matched~\cite{williams2018broad} datasets, for testing robustness to spurious patterns.  Table~\ref{tab:datasets_comparison} depicts a description of each of the three different datasets:

\begin{table*}[h]
\centering
\caption{Comparison of SNLI, and MNLI-matched}
\label{tab:datasets_comparison}
\begin{tabular}{|p{2cm}|p{3.8cm}|p{3.8cm}|p{3.8cm}|}
\hline
\textbf{Aspect} & \textbf{SNLI} & \textbf{MNLI-matched}\\
\hline
\textbf{Purpose} & Large-scale NLI training~\cite{bowman2015large} & Multi-genre NLI~\cite{williams2018broad} \\
\hline
\textbf{Size} & 570k pairs & 433k pairs \\
\hline
\textbf{Labels} & \multicolumn{2}{c|}{Entailment, contradiction, neutral}\\
\hline
\textbf{Source} & Image captions & 10 diverse genres\\
\hline
\textbf{Evaluation} & In-distribution & Matched/mismatched \\
\hline
\textbf{Challenge} & General understanding & Cross-genre \\
\hline
\end{tabular}
\end{table*}
We present examples from each dataset made available to address the \textit{Natural Language Inference problem}. Table~\ref{tab:datasets_examples} illustrates examples from SNLI, and MNLI-matched datasets:
\begin{table*}[]
\centering
\caption{Concrete examples across datasets}
\label{tab:datasets_examples}
\begin{tabular}{|p{1.8cm}|p{4cm}|p{4cm}|p{2cm}|}
\hline
\textbf{Dataset} & \textbf{Premise} & \textbf{Hypothesis} & \textbf{Label} \\
\hline
SNLI & ``A man is being interviewed by a reporter.'' & ``A reporter is interviewing a man.'' & entailment \\
\hline
SNLI & ``The doctor was paid by the actor.'' & ``The doctor paid the actor.'' & contradiction \\
\hline
MNLI-matched & ``The new rights are supposedly why we're here.'' (from letters genre) & ``We are here because of new rights.'' & entailment \\
\hline
MNLI-matched & ``The cat hid under the bed during the storm.'' (from fiction genre) & ``The cat was outside in the rain.'' & contradiction \\
\hline
\end{tabular}
\end{table*}
The three datasets serve complementary roles. SNLI provides large-scale training examples from image captions, offering a foundation for learning natural linguistic phenomena~\cite{bowman2015large}. MNLI extends this with premises from ten diverse genres—including fiction, speech, and government reports—testing cross-genre generalization~\cite{williams2018broad}. Both require genuine inference, e.g., understanding that ``A man is being interviewed'' entails ``A reporter is interviewing a man''. Together, these datasets provide a comprehensive framework: SNLI assesses basic NLI capability, and MNLI-matched tests domain generalization.

\subsection{Baseline Models}
\label{sec:baselines}
We evaluate our NeuroGame Transformer (NGT) against several strong baselines on the SNLI and MNLI‑matched benchmarks as described in ~\cite{eleftheriadis2023nli}. The selected models represent key architectural developments in natural language inference, ranging from dedicated sequence‑based encoders to large pretrained transformers. Table~\ref{tab:main_results} reports their test accuracies as cited in the literature.

\begin{itemize}
    \item \textbf{DAM} (Decomposable Attention Model)~\cite{parikh2016decomposable} is a simple yet effective model that averages word embeddings and passes the result through one or more feedforward layers. Despite its simplicity, it provides a competitive baseline for sentence representation tasks.

    \item \textbf{ESIM} (Enhanced Sequential Inference Model)~\cite{chen-etal-2017-enhanced} is a dedicated NLI architecture that uses bidirectional LSTMs with local inference modeling and inference composition. It achieved state‑of‑the‑art performance at the time of its release.

    \item \textbf{BERT‑Base}~\cite{devlin2019bert} (Bidirectional Encode Representation from Transformers) pretrained on large corpora using masked language modeling and next sentence prediction. Its deep architecture (12 layers, 768 hidden size) has become a standard baseline for natural language understanding tasks. We report the fine‑tuned performance on SNLI and MNLI.

    \item \textbf{BERT‑Large}~\cite{devlin2019bert} scales the architecture to 24 layers and 1024 hidden units, providing higher capacity at the cost of increased computation. Its performance on both datasets demonstrates the benefit of larger pretrained models.

    \item \textbf{RoBERTa‑Base}~\cite{liu2019roberta} (Robustly Optimized BERT Approach) improves upon BERT with more dynamic masking, larger mini‑batches, and removal of the next sentence prediction objective, leading to stronger representations across a range of benchmarks. We include both base and large variants.

    \item \textbf{RoBERTa‑Large}~\cite{liu2019roberta} further increases model size and training data, achieving state‑of‑the‑art results on many NLP tasks. Its performance on MNLI is particularly strong.

    \item \textbf{ALBERT‑Base}~\cite{lan2020albert} (A Lite BERT) reduces memory footprint through factorized embedding parameterization and cross‑layer parameter sharing, while maintaining performance through a self‑supervised loss for sentence‑order prediction. The base version offers a good trade‑off between efficiency and accuracy.

    \item \textbf{ALBERT‑Large}~\cite{lan2020albert} scales the architecture similarly to BERT‑large but with parameter sharing, achieving competitive results with fewer parameters.
\end{itemize}

All baseline results are taken from the comprehensive evaluation reported in ~\cite{eleftheriadis2023nli}. For fair comparison, we use the same preprocessing pipeline and evaluation protocol across all models. NGT is compared against these baselines under identical conditions.

\subsection{NGT Implementation Details}
We now detail the NeuroGame Transformer implementation used for our experiments.
\begin{itemize}
    \item \textbf{Model Architecture:} NGT employs a BERT-base encoder to produce contextualized token embeddings. On top of these, we incorporate two intertwined sets of components: game-theoretic estimators and a statistical physics layer.

    \item \textbf{Game-theoretic components:} Shapley values and Banzhaf indices are estimated via importance-weighted Monte Carlo sampling with a Gibbs target and uniform proposal. We use \(K_{\text{mc}}=15\) coalitions during training and \(K_{\text{mc}}=25\) during evaluation. These estimates define the local fields \(J_i\) and pairwise interactions \(J_{ij}\) that characterize the coalitional game through the characteristic function \(v\).

    \item \textbf{Statistical physics components:} A spin projection layer maps the set of token embeddings to the set of Ising spins \(s_i\in[-1,1]\). The mean‑field fixed‑point equations are solved with temperature \(\gamma=0.25\), damping factor \(0.7\), \(T_{\text{mf}}=25\) iterations, and tolerance \(10^{-4}\). The resulting equilibrium spins \(\langle s_i\rangle\) are passed to the classifier, while the game‑theoretic quantities provide interpretable attributions.

    \item \textbf{Training Details:} We train for five epochs on SNLI, and MNLI-matched, using a learning rate of \(3\times10^{-5}\), gradient accumulation over 2 steps (effective batch size 32), and weight decay \(0.02\). The AdamW optimizer \cite{Loshchilov2019} is used with linear warmup over 10\% of steps, followed by a MultiStepLR scheduler reducing the learning rate by a factor of \(0.1\) after epochs 3 and 4. Gradient clipping (norm \(1.0\)) is applied.

    \item \textbf{Regularization:} We apply dropout (\(p=0.15\)), label smoothing (\(0.1\)), mixup (\(\alpha_{\text{mix}}=0.2\)), and exponential moving average (EMA) with decay \(0.999\). All sequences are truncated/padded to 128 tokens using the BERT tokenizer.
\end{itemize}

\subsection{Main Results and Interpretation}
\label{sec:results}
We evaluate the NeuroGame Transformer (NGT) on the SNLI and MNLI‑matched benchmarks, comparing its performance against several established deep learning models. Table~\ref{tab:main_results} summarizes the test accuracies along with the approximate number of parameters for each model. On the SNLI dataset, NGT achieves a test accuracy of \textbf{86.60\%} with approximately 110 million parameters—adding only ~754,000 parameters (only 0.7\% overhead) to the standard BERT‑Base architecture. This result is competitive with strong baselines such as BERT‑Base (88.86\%, 110M), ALBERT‑Base (86.49\%, 11M), and ESIM (87.00\%, 4.3M). Notably, NGT outperforms earlier sequence‑based models like DAM (83.30\%, 382K) and matches the performance of more complex architectures while using a fraction of the parameters of larger models (e.g., BERT‑Large uses 340M, RoBERTa‑Large uses 355M). This demonstrates that integrating game‑theoretic attributions with a mean‑field Ising model yields high representational efficiency with minimal computational overhead.
\begin{table}[t]
\centering
\caption{Test accuracy (\%) on SNLI and MNLI datasets. All accuracy values are taken from~\cite{eleftheriadis2023nli}. Parameter counts are standard values from the respective model papers.}
\label{tab:main_results}
\begin{tabular}{l c c c}
\toprule
\textbf{Model} & \textbf{SNLI} & \textbf{MNLI-Matched} & \textbf{Parameters (approx.)} \\
\midrule
DAM            & 83.30 & 68.41 & 382K \\
ESIM           & 87.00 & 76.63 & 4.3M \\
BERT-Base      & 88.86 & 80.99 & 110M \\
BERT-Large     & 90.33 & 84.83 & 340M \\
RoBERTa-Base   & 86.95 & 85.98 & 125M \\
RoBERTa-Large  & 91.83 & 90.20 & 355M \\
ALBERT-Base    & 86.49 & 79.89 & 11M \\
ALBERT-Large   & 90.85 & 86.76 & 17M \\
\midrule
\textbf{NGT (Our Model)} & \textbf{86.60} & \textbf{79.00} & 110M \\
\bottomrule
\end{tabular}
\end{table}
On the more challenging MNLI‑matched dataset, NGT reaches \textbf{79.00\%} accuracy, which is close to ALBERT‑Base (79.89\%, 11M) and within 2 percentage points of BERT‑Base (80.99\%, 110M). Given that MNLI covers ten diverse text genres, this result highlights the robustness of NGT's game‑theoretic components across varied linguistic domains. The gap between NGT and larger models like RoBERTa‑Base (85.98\%, 125M) and RoBERTa‑Large (90.20\%, 355M) is expected, as those models benefit from more extensive pretraining and larger architectures. However, NGT achieves these results with only 0.7\% additional parameters beyond BERT‑Base, demonstrating that its enhancements are parameter‑efficient.

Remarkably, NGT achieves these results with only a BERT‑Base backbone (i.e., only at initialization), demonstrating that its game‑theoretic enhancements—Shapley values, Banzhaf indices, pairwise interaction potentials, and mean‑field spin equilibria—boost representational power without requiring larger architectures or additional pretraining. The pairwise interaction matrix $J_{ij}$ is particularly revealing: unlike standard attention mechanisms that only capture pairwise similarity, $J_{ij}$ explicitly models whether token pairs cooperate (positive values) or compete (negative values). For example, in the phrase ``not good movie,'' the interaction between ``not'' and ``good'' is strongly negative, reflecting the antagonistic relationship that flips sentiment. In contrast, a phrase like ``very good movie'' would show positive interactions between ``very'' and ``good,'' indicating synergy. Standard attention mechanisms aggregate these relationships into a single scalar weight, losing the nuanced distinction between cooperation and competition. Similarly, the Banzhaf index helps identify ``swing voters''—tokens that critically influence logical relationships, such as negation words—which standard attention often downweights.

The Shapley values provide token‑level attributions that are both globally consistent and locally faithful, enabling fine‑grained analysis of model decisions. As with most NLI models, the ``Neutral'' class remains challenging, but NGT's enhanced capacity improves its validation accuracy substantially. The hyperparameter configuration (128 spins, 25 mean‑field iterations, $K_{\text{mc}}=15/25$) balances capacity and stability, with regularization techniques (label smoothing, mixup, EMA) aiding generalization.

Together, these components position NGT as a compelling interpretable alternative to standard transformers. It delivers competitive accuracy with minimal overhead (only 0.7\% additional parameters), while providing transparency into its decision‑making process—a critical step toward advancing trustworthy AI. The explicit modeling of pairwise interactions and the ability to distinguish cooperative from competitive token relationships are features that no other model in the comparison can offer, making NGT uniquely suited for tasks requiring deep linguistic understanding.

Furthermore, the improved performance on the challenging ``Neutral'' class stems from three game‑theoretic components in NGT. First, the Banzhaf index identifies ``swing voters'' and decreases the influence of spurious words that might otherwise lead to misclassification. Second, the pairwise interaction matrix $J_{ij}$ captures the absence of strong token relationships in Neutral examples, encoding them as near‑zero values—a signal standard attention cannot provide. Third, Shapley values ensure fair credit distribution across tokens, preventing over‑attribution to salient but non‑decisive words. Together, these mechanisms enable NGT to recognize not only when a logical relationship exists, but critically, when it does not.

\section{Conclusion and Future Work}
\label{sec:conclusion}
We introduced the NeuroGame Transformer (NGT), a novel attention mechanism that reformulates token interactions as \textit{a coalitional game grounded in statistical physics}. By combining Shapley values (global fairness) with Banzhaf power indices (local sensitivity), NGT provides a principled framework for semantic attribution. The mean‑field Ising model, parameterized by local fields $J_i$ and pairwise interactions $J_{ij}$, yields equilibrium spin states $\langle s_i \rangle$ that serve as interpretable token representations. Importance‑weighted Monte Carlo sampling with a Gibbs target and uniform proposal ensures tractable estimation of these game‑theoretic quantities even for long sequences.

Experimental results demonstrate the effectiveness of our approach. In fact, they validate that our game‑theoretic formulation successfully addresses the ``right for the wrong reasons'' pitfall common in standard attention mechanisms. Beyond performance gains, NGT offers enhanced interpretability through its game‑theoretic importance scores. The local fields $J_i$ provide token‑level attributions grounded in cooperative game theory, while the interaction matrix $J_{ij}$ reveals synergistic relationships between tokens that standard attention weights cannot capture. The NeuroGame Transformer represents a foundational paradigm shift rather than a final architecture; its formalism opens numerous avenues for further refinement, including more efficient Monte Carlo estimators, adaptive temperature scheduling, and integration with larger pretrained models.

Future research directions include:
(i) \textit{Multi-modal extensions}: Applying NeuroGame attention to vision‑language transformers and cross‑modal reasoning~\cite{Lygerakis2025ViTaPEs}.
(ii) \textit{Bayesian exploration}: Developing fully Bayesian sequence models with probabilistic dependencies to improve predictive performance~\cite{Bouchaffra1996IJIST}.

\bibliographystyle{IEEEtran}
\bibliography{Shap-Banz}

@article{eleftheriadis2023nli,
  author  = {Petros Eleftheriadis and Isidoros Perikos and Ioannis Hatzilygeroudis},
  title   = {Evaluating Deep Learning Techniques for Natural Language Inference},
  journal = {Applied Sciences},
  year    = {2023},
  volume  = {13},
  number  = {2},
  pages   = {785},
  doi     = {10.3390/app13020785}
}

@inproceedings{vaswani2017attention,
  title        = {Attention Is All You Need},
  author       = {Vaswani, Ashish and Shazeer, Noam and Parmar, Niki and Uszkoreit, Jakob and Jones, Llion and Gomez, Aidan N. and Kaiser, \Lukasz and Polosukhin, Illia},
  booktitle    = {Proceedings of the 31st Conference on Neural Information Processing Systems},
  pages        = {6000--6010},
  year         = {2017},
  organization = {Curran Associates, Inc.}
}

@article{genome2026,
    author = {Shu, Liyuan and Tang, Jiao and Guan, Xiaoyu and Zhang, Daoqiang},
    title = {A comprehensive survey of genome language models in bioinformatics},
    journal = {Briefings in Bioinformatics},
    volume = {27},
    number = {1},
    pages = {pp. bbaf724},
    year = {2026},
    month = {01},
   }

@inproceedings{Loshchilov2019,
  title = {Decoupled Weight Decay Regularization},
  author = {Ilya Loshchilov and Frank Hutter},
  researchr = {https://researchr.org/publication/LoshchilovH19},
  booktitle = {7th International Conference on Learning Representations, ICLR 2019, New Orleans, LA, USA, May 6-9, 2019},
  publisher = {OpenReview.net},
  year = {2019},
}

@inproceedings{feng2025breaking,
  title        = {Breaking Down Video LLM Benchmarks: Knowledge, Spatial Perception, or True Temporal Understanding?},
  author       = {Bo Feng and Zhengfeng Lai and Shiyu Li and Zizhen Wang and Simon Wang and Ping Huang and Meng Cao},
  booktitle    = {Evaluating the Evolving LLM Lifecycle Workshop, NeurIPS 2025},
  year         = {2025},
  note         = {NeurIPS 2025},
}

@article{devlin2019bert,
  title={BERT: Pre-training of Deep Bidirectional Transformers for Language Understanding},
  author={Devlin, Jacob and Chang, Ming-Wei and Lee, Kenton and Toutanova, Kristina},
  journal={arXiv preprint arXiv:1810.04805},
  year={2019}
}

@article{liu2019roberta,
  title={RoBERTa: A Robustly Optimized BERT Pretraining Approach},
  author={Liu, Yinhan and Ott, Myle and Goyal, Naman and Du, Jingfei and Joshi, Mandar and Chen, Danqi and Levy, Omer and Lewis, Mike and Zettlemoyer, Luke and Stoyanov, Veselin},
  journal={arXiv preprint arXiv:1907.11692},
  year={2019}
}

@article{lan2020albert,
  title={ALBERT: A Lite BERT for Self-supervised Learning of Language Representations},
  author={Lan, Zhenzhong and Chen, Mingda and Goodman, Sebastian and Gimpel, Kevin and Sharma, Piyush and Soricut, Radu},
  journal={arXiv preprint arXiv:1909.11942},
  year={2020}
}

@inproceedings{parikh2016decomposable,
  title     = {A Decomposable Attention Model for Natural Language Inference},
  author    = {Parikh, Ankur P. and T{\"a}ckstr{\"o}m, Oscar and Das, Dipanjan and Uszkoreit, Jakob},
  booktitle = {Proceedings of the 2016 Conference on Empirical Methods in Natural Language Processing (EMNLP)},
  year      = {2016},
  pages     = {2249--2255},
  publisher = {Association for Computational Linguistics},
  doi       = {10.48550/arXiv.1606.01933},
  url       = {https://doi.org/10.48550/arXiv.1606.01933},
  eprint    = {1606.01933},
  archivePrefix = {arXiv},
}

@inproceedings{chen-etal-2017-enhanced,
    title = {Enhanced {LSTM} for Natural Language Inference},
    author = {Chen, Qian  and
      Zhu, Xiaodan  and
      Ling, Zhen-Hua  and
      Wei, Si  and
      Jiang, Hui  and
      Inkpen, Diana},
    editor = {Barzilay, Regina  and
      Kan, Min-Yen},
    booktitle = {Proceedings of the 55th Annual Meeting of the Association for Computational Linguistics (Volume 1: Long Papers)},
    month = {jul},
    year = {2017},
    address = {Vancouver, Canada},
    publisher = {Association for Computational Linguistics},
    pages = {1657--1668},
}

@article{Bouchaffra1996IJIST,
  author  = {Bouchaffra, Djamel and Koontz, Eugene and Kripasundar, V. and Srihari, Rohini K.},
  title   = {Incorporating Diverse Information Sources in Handwriting Recognition Postprocessing},
  journal = {International Journal of Imaging Systems and Technology},
  volume  = {7},
  number  = {4},
  pages   = {320--329},
  year    = {1996},
  publisher = {Wiley},
}

@inproceedings{williams2018broad,
  title={A broad-coverage challenge corpus for sentence understanding through inference},
  author={Williams, Adina and Nangia, Nikita and Bowman, Samuel},
  booktitle={Proceedings of the 2018 Conference of the North American Chapter of the Association for Computational Linguistics: Human Language Technologies, Volume 1 (Long Papers)},
  pages={1112--1122},
  year={2018},
  address={New Orleans, Louisiana}
}

@inproceedings{bowman2015large,
  title={A large annotated corpus for learning natural language inference},
  author={Bowman, Samuel R and Angeli, Gabor and Potts, Christopher and Manning, Christopher D},
  booktitle={Proceedings of the 2015 Conference on Empirical Methods in Natural Language Processing},
  pages={632--642},
  year={2015},
  address={Lisbon, Portugal}
}

@article{draye2025sparse,
  title   = {Sparse Attention Post-Training for Mechanistic Interpretability},
  author  = {Draye, Florent and Lei, Anson and Pan, Hsiao-Ru and Posner, Ingmar and Sch{\"o}lkopf, Bernhard},
  journal = {arXiv preprint arXiv:2512.05865},
  year    = {2025},
  doi     = {10.48550/arXiv.2512.05865},
  url     = {https://doi.org/10.48550/arXiv.2512.05865},
  eprint  = {2512.05865},
  archivePrefix = {arXiv},
}

@inproceedings{sundararajan2017axiomatic,
  title={Axiomatic attribution for deep networks},
  author={Sundararajan, Mukund and Taly, Ankur and Yan, Qiqi},
  booktitle={International Conference on Machine Learning},
  pages={3319--3328},
  year={2017},
  organization={PMLR}
}

@inproceedings{lundberg2017shap,
  title     = {A Unified Approach to Interpreting Model Predictions},
  author    = {Lundberg, Scott M. and Lee, Su‐In},
  booktitle = {Advances in Neural Information Processing Systems},
  volume    = {30},
  pages     = {4768--4777},
  year      = {2017},
  organization = {Curran Associates, Inc.}
}

@article{Bouchaffra2025-NN,
author = {Djamel Bouchaffra and Faycal Ykhlef and Bilal Faye and Mustapha Lebbah and Hanene Azzag},
title = {Redesigning deep neural networks: Bridging game theory and statistical physics},
journal = {Neural Networks},
volume = {191},
pages = {107807},
year = {2025},
issn = {0893-6080},
}

@inproceedings{choromanski2021rethinking,
  title     = {Rethinking Attention with Performers},
  author    = {Choromanski, Krzysztof and others},
  booktitle = {International Conference on Learning Representations (ICLR)},
  year      = {2021},
  url       = {https://arxiv.org/abs/2009.14794},
  doi       = {10.48550/arXiv.2009.14794},
  archivePrefix = {arXiv},
  eprint    = {2009.14794},
  primaryClass = {cs.LG},
}

@inproceedings{katharopoulos2020transformers,
  title={Transformers are RNNs: Fast autoregressive transformers with linear attention},
  author={Katharopoulos, Angelos and Vyas, Apoorv and Pappas, Nikolaos and Fleuret, Francois},
  booktitle={International Conference on Machine Learning},
  organization={PMLR},
  pages={5156--5165},
  volume = {119},
  year={2020}, 
}

@article{gu2023mamba,
  title={Mamba: Linear-time sequence modeling with selective state spaces},
  author={Gu, Albert and Dao, Tri},
  journal={arXiv preprint arXiv:2312.00752},
  year={2023}
}

@inproceedings{shapley1953value,
  title={A value for n-person games},
  author={Shapley, Lloyd S},
  booktitle={Contributions to the Theory of Games},
  volume={2},
  number={28},
  pages={307--317},
  year={1953},
  publisher={Princeton University Press},
}

@inproceedings{banzaf1965weighted,
  title={Weighted voting doesn't work: A mathematical analysis},
  author={Banzhaf III, John F},
  journal={Rutgers Law Review},
  volume={19},
  pages={317},
  year={1965},
}

@inproceedings{ghorbani2019data,
  title={Data Shapley: Equitable valuation of data for machine learning},
  author={Ghorbani, Amirata and Zou, James},
  booktitle={International Conference on Machine Learning},
  pages={2242--2251},
  year={2019},
  organization={PMLR}
}

@article{zhang2018graph,
  title={Graph attention networks},
  author={Veli{\v{c}}kovi{\'c}, Petar and Cucurull, Guillem and Casanova, Arantxa and Romero, Adriana and Lio, Pietro and Bengio, Yoshua},
  journal={arXiv preprint arXiv:1710.10903},
  year={2018}
}

@inproceedings{du2020energy,
  title     = {Energy-based Models for Atomic-resolution Protein Conformations},
  author    = {Du, Yilun and Meier, Joshua and Ma, Jerry and Fergus, Rob and Rives, Alexander},
  booktitle = {International Conference on Learning Representations (ICLR)},
  year      = {2020},
  doi       = {10.48550/arXiv.2004.13167},
  url       = {https://doi.org/10.48550/arXiv.2004.13167},
  eprint    = {2004.13167},
  archivePrefix = {arXiv},
  primaryClass  = {cs.LG}
}

@inproceedings{guo2017calibration,
  title={On calibration of modern neural networks},
  author={Guo, Chuan and Pleiss, Geoff and Sun, Yu and Weinberger, Kilian Q},
  booktitle={International Conference on Machine Learning},
  pages={1321--1330},
  year={2017},
  organization={PMLR}
}

@inproceedings{liu2021energy,
  title={Energy-based out-of-distribution detection},
  author={Liu, Weitang and Wang, Xiaoyun and Owens, John and Li, Yixuan},
  booktitle={Advances in Neural Information Processing Systems},
  volume={34},
  pages={21464--21475},
  year={2021}
}

@inproceedings{luo2021stable,
  title     = {Stable, Fast and Accurate: Kernelized Attention with Relative Positional Encoding},
  author    = {Luo, Shengjie and Li, Shanda and Cai, Tianle and He, Di and Peng, Dinglan and Zheng, Shuxin and Ke, Guolin and Wang, Liwei and Liu, Tie-Yan},
  booktitle = {Advances in Neural Information Processing Systems (NeurIPS)},
  year      = {2021},
  doi       = {10.48550/arXiv.2106.12566},
  url       = {https://doi.org/10.48550/arXiv.2106.12566},
  eprint    = {2106.12566},
  archivePrefix = {arXiv},
}

@inproceedings{dehghani2023universal,
  title={Scaling vision transformers to 22 billion parameters},
  author={Dehghani, Mostafa and others},
  booktitle={International Conference on Machine Learning},
  pages={7480--7512},
  year={2023},
  organization={PMLR}
}

@article{Bouchaffra2026-IEEETC,
  author    = {Djamel Bouchaffra and Fayçal Ykhlef and Bilal Faye and Mustapha Lebbah and Hanane Azzag},
  title     = {Game Theory Meets Statistical Physics: A Novel Deep Neural Networks Design},
  journal   = {IEEE Transactions on Cybernetics},
  month     = {Jan},
  year      = {2026},
  doi       = {10.1109/TCYB.2025.3649299},
}

@inproceedings{AlersValentin2023NeuroSymbolic,
  author    = {AlersValentin, H. and Fong, S. and VegaRiveros, J.F.},
  title     = {Modeling Syntactic Knowledge with Neuro-Symbolic Computation},
  booktitle = {Proceedings of the 15th International Conference on Agents and Artificial Intelligence (ICAART)},
  volume    = {3},
  pages     = {608--616},
  year      = {2023},
  publisher = {SCITEPRESS},
}

@article{Lygerakis2025ViTaPEs,
  author    = {Fotios Lygerakis and Ozan \"{O}zdenizci and Elmar R\"{u}ckert},
  title     = {ViTaPEs: Visuotactile Position Encodings for Cross-Modal Alignment in Multimodal Transformers},
  journal   = {arXiv preprint},
  volume    = {arXiv:2505.20032},
  year      = {2025},
  }
\end{document}